\documentclass[runningheads]{llncs}

\usepackage{eccv}

\usepackage{eccvabbrv}

\usepackage{graphicx}
\usepackage{booktabs}
\usepackage{multirow}
\usepackage{multicol}
\usepackage{url}

\usepackage[accsupp]{axessibility}  

\newcommand{\R}{\mathbb{R}}

\usepackage{hyperref}

\usepackage{orcidlink}

\newcommand{\method}{AdapterTune\xspace}
\newcommand{\vit}{Vision Transformer\xspace}
\newcommand{\vits}{Vision Transformers\xspace}
\newtheorem{assumption}{Assumption}

\begin{document}

\title{\method: Zero-Initialized Low-Rank Adapters for Frozen \vits}

\titlerunning{\method}

\author{Salim Khazem\inst{1}\orcidlink{0000-0001-5958-6120}}

\authorrunning{S. Khazem et al.}

\institute{Talan Research Center, Paris, France \and
\email{salim.khazem@talan.com}\\}

\maketitle

\begin{abstract}
Frozen-backbone transfer with Vision Transformers faces two under-addressed issues: optimization instability when adapters are naively inserted into a fixed feature extractor, and the absence of principled guidance for setting adapter capacity. We introduce AdapterTune, which augments each transformer block with a residual low-rank bottleneck whose up-projection is zero-initialized, guaranteeing that the adapted network starts exactly at the pretrained function and eliminates early-epoch representation drift. On the analytical side, we formalize adapter rank as a capacity budget for approximating downstream task shifts in feature space. The resulting excess-risk decomposition predicts monotonic but diminishing accuracy gains with increasing rank, an ``elbow'' behavior we confirm through controlled sweeps. We evaluate on 9 datasets and 3 backbone scales with multi-seed reporting throughout. On a core 5 dataset transfer suite, AdapterTune improves top-1 accuracy over head-only transfer by +14.9 points on average while training only 0.92 of the parameters required by full fine-tuning, and outperforms full fine-tuning on 10 of 15 dataset-backbone pairs. Across the full benchmark, \method improves over head-only transfer on every dataset-backbone pair tested. Ablations on rank, placement, and initialization isolate each design choice. The code is available at: \url{https://github.com/salimkhazem/adaptertune}

\end{abstract}

\begin{figure}[h]
  \centering
  \includegraphics[width=0.85\linewidth]{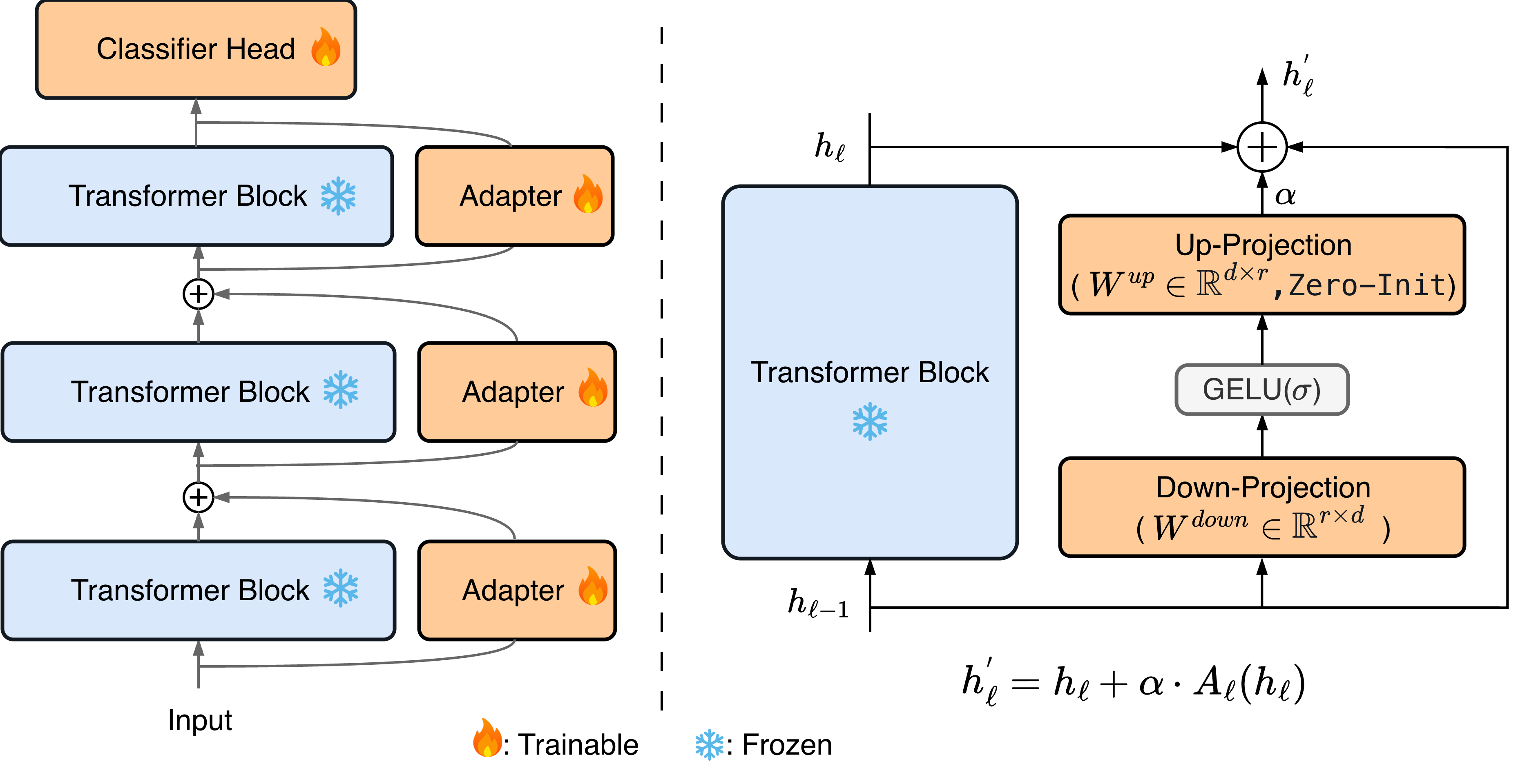}
  \caption{%
    \textbf{\method architecture.} \textbf{(Left)} Trainable residual adapters (orange) are inserted into the strictly frozen \vit backbone (blue). \textbf{(Right)} The adapter uses a low-rank bottleneck where the up-projection is zero-initialized. This guarantees an initial zero output ($A_\ell(h_\ell) = 0$), acting as an exact identity mapping to prevent early-epoch optimization drift.}
  \label{fig:architecture}
\end{figure}

\section{Introduction}
\label{sec:intro}
Large pretrained \vits are now standard backbones for image recognition and transfer learning~\cite{dosovitskiy2021vit,touvron2021deit}. However, full fine-tuning~\cite{zhai2022scalingvit,he2022mae} updates all weights and quickly becomes expensive when many downstream datasets or continual updates are required. At the other extreme, head-only tuning is cheap but often underfits because the frozen representation cannot align with task specific shifts. This paper targets the practical middle ground: we adapt a frozen pretrained \vit with lightweight residual adapters. Our method, \method, inserts low-rank bottleneck modules inside transformer blocks and trains only adapter weights and the classification head. The up-projection is zero-initialized so the initial network is exactly the pretrained model, which improves optimization stability in low data and multi-dataset settings. Beyond architecture, we ask a central question: \emph{how much rank is enough?} We provide a theory view where adapters approximate low-rank task shifts in feature space. The resulting bound predicts monotonic but saturating improvements as rank increases, matching our empirical rank sweeps. We benchmark \method with strict reproducibility (fixed seeds and deterministic splits) across several datasets and backbones. Our comprehensive evaluation spans 9 datasets, 3 backbones, and 3 adaptation methods. all averaged over 3 random seeds. On the core benchmark, \method improves top-1 over head-only tuning by +14.9 points on average while training only 0.92\% of the parameters used by full fine-tuning. In summary, our main contribution are (i) we introduce a simple residual adapter formulation for frozen \vits with zero-initialized up-projection and controllable rank and placement frequency, (ii) we provide a theoretical framework linking adapter rank to approximation error for low-rank task shifts, yielding a diminishing returns corollary; and (iii) we deliver a fully reproducible benchmark suite featuring multi-dataset, multi-backbone comparisons and targeted ablations on rank, placement, and initialization.



\section{Related Work}

\textbf{Pretrained \vits as transfer backbones.}
Dosovitskiy et al.~\cite{dosovitskiy2021vit} established the \vit as a competitive image classifier when trained on large corpora such as JFT-300M or ImageNet-21k. Touvron et al.~\cite{touvron2021deit} showed that data-efficient distillation strategies bring ViTs within reach of practitioners without access to proprietary data. Subsequent work has scaled architectures~\cite{zhai2022scalingvit}, improved masked autoencoder retraining~\cite{he2022mae}, and studied the geometry of ViT feature spaces~\cite{raghu2021vision}. Parallel efforts have also explored alternative image representations to improve efficiency and robustness, such as polygonal contour-based representations for classification~\cite{khazem2025polygonet}. Across this line, full fine-tuning remains the dominant adaptation protocol. We study the less explored regime where the backbone is permanently frozen and only lightweight adapters are updated.

\textbf{\hspace{-0.5cm}Adapter-based transfer learning.} Bottleneck residual adapters originated in NLP~\cite{houlsby2019adapter,pfeiffer2021adapterfusion}. In vision, AdaptFormer~\cite{chen2022adaptformer} places parallel adapters inside ViT MLP sub-blocks for action recognition, RepAdapter~\cite{luo2023repadapter} reparameterizes them to remove inference latency, and NOAH~\cite{zhang2022noah} searches optimal PEFT combinations. While LLaMA-Adapter~\cite{zhang2023llamaadapter} adds zero-initialized scalar gates to language models, \method zeroes the actual \emph{up-projection matrix}. This mechanistically guarantees zero initial output for all inputs without relying on gating scalars, is tailored for frozen vision backbones, and includes formal rank analysis. Finally, \method fundamentally differs from AdaptFormer: (i) adapters wrap the \emph{entire} transformer block, enabling richer feature interactions; (ii) strict backbone freezing guarantees safe multi-task serving; and (iii) a rigorous rank-capacity bound guides hyperparameter selection rather than treating rank as a purely empirical knob.

\textbf{\hspace{-0.5cm}Low-rank weight adaptation.}
LoRA\cite{hu2022lora} decomposes weight updates as $\Delta W= BA$ with $B \in \R^{d \times r}$ and $A \in \R^{r \times d}$, targeting attention weight matrices. Unlike \method, LoRA modifies backbone weights additively at inference; once merged, the adapted and unadapted model are indistinguishable in structure, making multi-task serving more complex. FacT~\cite{jie2022fact} extends LoRA ideas to tensor factorizations of ViT weight matrices. Consolidator~\cite{he2023consolidator, khazem2026topolora} combines LoRA and adapter ideas, showing
complementary benefits. Our analysis in \cref{sec:theory} is closest in spirit to the theoretical study of LoRA by~\cite{zeng2024expressive}, but we apply it to \emph{residual function-space} modules rather than weight space decompositions, which permits a cleaner separation between the frozen pretrained function and the learned delta. 

\textbf{\hspace{-0.5cm}Visual prompt tuning.}
Jia et al.~\cite{jia2022vpt} prepend a small set of learnable prompt tokens to the input sequence, updating only these tokens during adaptation (VPT-Deep also inserts prompts at intermediate layers). While elegant, prompt tuning adds to the sequence length, increasing attention complexity quadratically, and it modifies the forward pass in a way that can disrupt positional encodings.  SSF~\cite{lian2022ssf} instead applies learned scale and shift affine transformations after each layer, achieving strong results with very few parameters. BitFit~\cite{zaken2022bitfit} tunes only bias parameters, providing a minimal but surprisingly competitive baseline. CLIP-Adapter~\cite{gao2024clip} applies lightweight feature adapters in the embedding space of vision-language models. Recent work has also explored low-rank adaptation strategies for vision transformers, enabling efficient fine-tuning through structured parameter updates~\cite{khazem2025multi}. \method occupies a complementary point in design space: residual adapters after full blocks, with both down- and up-projection trainable, offering higher capacity than SSF/BitFit while remaining far cheaper than full fine-tuning. 

\textbf{\hspace{-0.5cm}Parameter efficiency analysis.}
The empirical literature often reports accuracy at a fixed parameter budget without asking \emph{why} a particular budget suffices.
We contribute a formal answer for the adapter setting: if the required feature shift has approximately rank $r^*$, then adapters of rank $r < r^*$ incur tail-eigenvalue approximation error and adapters of rank $r \ge r^*$ suffer no further approximation loss, resulting in the diminishing-returns curve we observe. This analysis complements the empirical parameter-efficiency studies of~\cite{he2022towards} and the expressivity analysis of~\cite{zeng2024expressive}.

\section{Method}
\subsection{Preliminaries}
\label{subsec:prelim}

Let $f_\Theta : \mathcal{X} \to \R^d$ be a pretrained ViT encoder with $L$ transformer blocks, a hidden dimension of $d$, and a fixed parameter set $\Theta$.
We denote the token representation after block $\ell$ by $h_\ell \in \R^{N_t \times d}$, where $N_t$ is the number of tokens. For clarity, we drop the token-sequence dimension and treat $h_\ell$ as a
$d$-dimensional vector; the adapter is applied identically across all tokens via shared weights.

\subsection{Residual Adapter Module}
\label{subsec:adapter}

We introduce an adapter module $A_\ell : \R^d \to \R^d$ defined as
\begin{equation}
  A_\ell(h)
  \;=\;
  W_\ell^{\mathrm{up}}\;
  \sigma\!\left(
    W_\ell^{\mathrm{down}}\, h + b_\ell^{\mathrm{down}}
  \right)
  + b_\ell^{\mathrm{up}},
  \label{eq:adapter}
\end{equation}
where
$W_\ell^{\mathrm{down}} \in \R^{r \times d}$,
$W_\ell^{\mathrm{up}} \in \R^{d \times r}$,
$b_\ell^{\mathrm{down}} \in \R^{r}$,
$b_\ell^{\mathrm{up}} \in \R^{d}$
are learnable parameters,
$r \ll d$ is the bottleneck rank, and $\sigma$ is the GELU activation~\cite{hendrycks2023gelu}. The adapted representation at block $\ell$ is
\begin{equation}
  h'_\ell \;=\; h_\ell \;+\; \alpha\, A_\ell(h_\ell),
  \label{eq:residual_adapter}
\end{equation}
where $\alpha > 0$ is a fixed scale factor (default $\alpha = 1$). When $A_\ell(h_\ell) = 0$, the network reduces exactly to the pretrained forward pass a property we enforce at initialization (\cref{subsec:init}).

\paragraph{\textbf{Placement.}}
Adapters are inserted after every block (\texttt{every=1}, default) or every
$k$-th block (\texttt{every=}$k$).
With $k=1$, the total number of adapter modules is $L$; with $k=2$ it is
$\lfloor L/2 \rfloor$.
Our ablations (\cref{tab:place_init}) show that both placements yield similar accuracy on CIFAR-10/ViT-S, with a gap of $<0.1$ points, confirming that every other block placement is a viable, cheaper alternative.

\subsection{Zero-Initialization for Stable Optimization}
\label{subsec:init}

A critical design choice is the initialization of $W_\ell^{\mathrm{up}}$ and
$b_\ell^{\mathrm{up}}$.
We set
\begin{equation}
  W_\ell^{\mathrm{up}} \leftarrow \mathbf{0}, \qquad
  b_\ell^{\mathrm{up}} \leftarrow \mathbf{0}
  \label{eq:zero_init}
\end{equation}
at the start of training, while $W_\ell^{\mathrm{down}}$ is initialized from
$\mathcal{N}(0, \sigma_0^2)$ with $\sigma_0 = 0.02$.
Under \cref{eq:zero_init}, $A_\ell(h) = 0$ for any input $h$, and therefore $h'_\ell = h_\ell$: the adapted network is \emph{identical} to the pretrained
network at initialization. This guarantee has two practical benefits. First, the pretrained representation is preserved for the classifier head
from the very first batch, avoiding the early epoch loss spikes caused by random adapter initialization. Second, gradients flow through the residual path $h_\ell$ unmodified at
step zero, giving the classifier head a warm start on features it was trained on. We compare zero initialization against small random initialization in
\cref{tab:place_init}; zero initialization yields lower variance across seeds, while small random initialization attains a slightly higher mean in this particular
CIFAR-10/ViT-S setting but at the cost of less stable optimization.

\subsection{Trainable Parameter Count}
\label{subsec:params}

Each adapter at rank $r$ contributes
\begin{equation}
  N_{\text{adapter}}(r,d) = 2rd + r + d
  \label{eq:param_count}
\end{equation}
trainable parameters. For a model with $L$ blocks, adapters at every block, and a $C$-class linear head over a \texttt{[CLS]} token:
\begin{equation}
  N_{\text{trainable}}
  = L \cdot N_{\text{adapter}}(r,d) + Cd.
  \label{eq:total_params}
\end{equation}

\cref{tab:param_counts} summarizes the trainable parameter counts and their fraction of the full model for our three backbones at default rank $r=16$.

\begin{table}[t]
\centering
\caption{%
  Trainable parameters for each backbone at adapter rank $r=16$, every block. ``\%FT'' is the fraction relative to full fine-tuning parameters (all backbone weights plus head). }
\label{tab:param_counts}
\setlength{\tabcolsep}{6pt}
\begin{tabular}{lrrrr}
\toprule
Backbone      & $d$  & $L$ & Adapter params & \%FT \\
\midrule
DeiT-T/16     & 192  & 12  & 76\,K & 0.67\% \\
ViT-S/16      & 384  & 12  & 303\,K & 0.70\% \\
ViT-B/16      & 768  & 12  & 1.2\,M & 1.40\% \\
\bottomrule
\end{tabular}
\end{table}

Across all backbones, adapter training uses well under $1.5\%$ of the parameters of full fine-tuning, confirming the extreme parameter efficiency of the approach.

\subsection{Training Objective and Protocol}
\label{subsec:training}

Given a labeled dataset $\mathcal{D} = \{(x_i, y_i)\}_{i=1}^N$, we minimize cross-entropy over the trainable parameters
$\psi = \{\{W_\ell^{\mathrm{down}}, b_\ell^{\mathrm{down}},
            W_\ell^{\mathrm{up}},  b_\ell^{\mathrm{up}}\}_{\ell},\, \phi\}$:
\begin{equation}
  \min_{\psi}\;
  \frac{1}{N} \sum_{i=1}^N
  \mathrm{CE}\!\left(g_\phi\!\left(f_{\Theta,\psi}(x_i)\right),\, y_i\right),
  \label{eq:loss}
\end{equation}
where $g_\phi$ is the linear classification head and $f_{\Theta,\psi}$ is the adapted encoder with frozen $\Theta$. We use AdamW~\cite{loshchilov2019adamw} with a cosine learning-rate schedule, 5 warm-up epochs, base learning rate $10^{-3}$, weight decay $5 \times 10^{-2}$, gradient clipping at $1.0$, and train for 50 epochs.

\subsection{Comparison Regimes}
\label{subsec:regimes}
We compare three adaptation regimes throughout all experiments. In the \textit{Head-Only} setting, the backbone is entirely frozen and only the classification head is trained; this incurs minimal parameter cost but prevents any representational adaptation. At the other extreme, \textit{Full Fine-Tuning} updates all backbone weights alongside the head, providing maximum expressiveness but requiring prohibitive per task storage at scale. Finally, our proposed \textit{\method} bridges this gap: the backbone remains frozen while only the lightweight adapters and the classification head are trained, successfully combining strict parameter efficiency with robust representational adaptability.

\section{Theoretical Analysis}
\label{sec:theory}

We provide a formal account of when and why low-rank residual adapters suffice for downstream adaptation. The analysis rests on a \emph{linear approximation} of the adapter's action on the frozen feature space; we discuss the scope and limitations of this linearization at the end of the section.

\subsection{Setup and Assumptions}
\label{subsec:theory_setup}
Consider a single transformer block with frozen representation $h \in \R^d$, $\|h\|_2 \le B$ almost surely. After training on a downstream task, full fine-tuning implicitly learns a \emph{target feature shift}: the transformation $\Delta^\star : \R^d \to \R^d$ such that the fine-tuned block output equals $h + \Delta^\star(h)$, modulo higher-order nonlinearities.

\begin{assumption}[Low-rank linearized shift]
\label{ass:lowrank}
The linearization of $\Delta^\star$ around the pretrained representation is a matrix $\Delta^\star \in \R^{d \times d}$ with singular value decomposition $\Delta^\star = U \,\mathrm{diag}(\sigma_1, \ldots, \sigma_d)\, V^\top$, 
where $\sigma_1 \ge \sigma_2 \ge \cdots \ge \sigma_d \ge 0$.
\end{assumption}

A rank-$r$ adapter with up-projection $W^{\mathrm{up}} \in \R^{d \times r}$ and down-projection $W^{\mathrm{down}} \in \R^{r \times d}$ induces a linear approximation $\Delta_r = W^{\mathrm{up}} W^{\mathrm{down}} \in \R^{d \times d}$ of rank at most $r$.
The GELU nonlinearity between the two projections introduces higher-order terms, but to first order the adapter computes $\Delta_r h$.

\subsection{Approximation Bound}
\label{subsec:approx_bound}

\begin{theorem}[Approximation by rank-$r$ adapters]
\label{thm:approx}
Under Assumption. (\ref{ass:lowrank}), let $\Delta_r^\star$ denote the best rank-$r$ approximation of $\Delta^\star$ (obtained by truncated SVD at rank $r$). There exist adapter parameters $\{W^{\mathrm{up}}, W^{\mathrm{down}}\}$ such that the adapter $A$ satisfies, for any $h$ with $\|h\|_2 \le B$,
\begin{equation}
  \mathbb{E}\!\left[\,
    \bigl\|(\Delta^\star - \Delta_r^\star)\,h\bigr\|_2^2
  \right]
  \;\le\;
  B^2 \sum_{i > r} \sigma_i^2.
  \label{eq:approx_bound}
\end{equation}
Moreover, if the downstream loss is $L_\ell$-Lipschitz in the logits and the classifier head is $L_g$-Lipschitz, the excess risk of rank-$r$ adaptation decomposes as
\begin{equation}
  \mathcal{E}(r)
  \;\lesssim\;
  \underbrace{L_\ell L_g B \sqrt{\sum_{i>r} \sigma_i^2}}_{\text{approximation error}}
  \;+\;
  \underbrace{\widetilde{\mathcal{O}}\!\left(\sqrt{\frac{Ldr}{n}}\right)}_{\text{estimation error}},
  \label{eq:excess_risk}
\end{equation}
where $L$ is the number of adapted blocks and $n$ is the number of training samples.
\end{theorem}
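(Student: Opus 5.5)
The argument has two parts: a deterministic approximation step obtained from Eckart--Young, and an excess-risk decomposition obtained by propagating that error through the Lipschitz head and loss and adding a standard uniform-convergence term. Throughout we work in the linearized model of Assumption~\ref{ass:lowrank}, where the adapter computes $\Delta_r h = W^{\mathrm{up}}W^{\mathrm{down}}h$.

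\textbf{Construction and pointwise bound.} Write the truncated SVD $\Delta_r^\star = U_r\,\mathrm{diag}(\sigma_1,\dots,\sigma_r)\,V_r^\top$, where $U_r,V_r$ collect the first $r$ singular vectors. Choose
\[
W^{\mathrm{up}} = U_r\,\mathrm{diag}(\sigma_1,\dots,\sigma_r), \qquad W^{\mathrm{down}} = V_r^\top .
\]
Set both biases to zero. Then the linearized adapter equals $\Delta_r^\star$ exactly, which proves the existence claim.

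For any $h$ with $\|h\|_2\le B$, orthonormality of $U$ gives
\[
\|(\Delta^\star-\Delta_r^\star)h\|_2^2=\sum_{i>r}\sigma_i^2\,(v_i^\top h)^2 .
\]
This is at most $\sigma_{r+1}^2\|h\|_2^2$, and hence at most $B^2\sum_{i>r}\sigma_i^2$. We deliberately use the crude bound $\sigma_{r+1}^2\le\sum_{i>r}\sigma_i^2$ so the result matches the statement; it is also immediate from $\|M\|_{2}\le\|M\|_F$ with $\|\Delta^\star-\Delta_r^\star\|_F^2=\sum_{i>r}\sigma_i^2$. Since the bound holds almost surely, taking expectations over $h$ gives \eqref{eq:approx_bound}. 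Eckart--Young further shows that $\Delta_r^\star$ minimizes this worst-case error over all rank-$r$ linear adapters, which justifies calling it optimal.

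\textbf{Excess-risk decomposition.} Let $\psi^\star$ denote the idealized full-shift predictor, $\psi_r$ the constructed rank-$r$ adapter, and $\hat\psi_r$ the empirical risk minimizer over the rank-$r$ class. Split the excess risk as
\[
R(\hat\psi_r)-R(\psi^\star) = \bigl[R(\hat\psi_r)-R(\psi_r)\bigr] + \bigl[R(\psi_r)-R(\psi^\star)\bigr].
\]

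For the second bracket, the loss is $L_\ell$-Lipschitz in the logits and the head is $L_g$-Lipschitz. Hence $|R(\psi_r)-R(\psi^\star)|\le L_\ell L_g\,\mathbb{E}\|(\Delta^\star-\Delta_r^\star)h\|_2$. By Jensen, this is at most $L_\ell L_g\sqrt{\mathbb{E}\|(\Delta^\star-\Delta_r^\star)h\|_2^2}$, and \eqref{eq:approx_bound} then yields the approximation term.

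For the first bracket, use the usual ERM argument. It is bounded by $2\sup_{\psi}|R(\psi)-\hat R(\psi)|$ over the rank-$r$ adapter class. Control this supremum by Rademacher complexity or a covering-number argument for a Lipschitz parametric class with $P = L(2rd+r+d)+Cd$ parameters, using the count in \eqref{eq:param_count}. This gives $\widetilde{\mathcal O}(\sqrt{P/n})$, which equals $\widetilde{\mathcal O}(\sqrt{Ldr/n})$ after absorbing the head and bias terms into the $\widetilde{\mathcal O}$.

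\textbf{Main obstacle.} The hard step is the estimation term, and in particular its reduction to a single layer. A covering bound requires the adapted network to be Lipschitz in its parameters. Composing $L$ adapted blocks multiplies Lipschitz constants, and with frozen blocks and norm-bounded adapter weights this produces constants exponential in $L$. I would absorb these constants, together with log factors, into the $\widetilde{\mathcal O}$ and the $\lesssim$, and state explicitly that weight norms are assumed bounded.

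Similarly, the approximation step is proved only for a single linearized block. Extending it to $L$ blocks would require summing per-block errors weighted by the Lipschitz constants of the downstream blocks. The GELU higher-order terms are likewise left outside the bound, consistent with the linearization caveat stated at the start of this section.
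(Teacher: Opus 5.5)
Your proposal is correct and follows the paper's route. It realizes $\Delta_r^\star$ by a truncated-SVD factorization; your unbalanced split $U_r\Sigma_r$, $V_r^\top$ in place of the paper's $U_r\Sigma_r^{1/2}$, $\Sigma_r^{1/2}V_r^\top$ is immaterial. It then splits the excess risk into an approximation term and a covering-number estimation term over $\mathcal{O}(Ldr)$ parameters, with your explicit Jensen and ERM steps filling in what the paper only calls a ``standard bias-variance argument.''

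Your pointwise bound $\|(\Delta^\star-\Delta_r^\star)h\|_2^2\le\sigma_{r+1}^2\|h\|_2^2$ is actually cleaner than the paper's sketch, which claims the expected residual \emph{equals} $\mathbb{E}\|h\|_2^2\sum_{i>r}\sigma_i^2$ when only the inequality holds in general. The one loose spot, which the paper shares, is that the $Cd$ head parameters are not a logarithmic factor, so folding them into $\widetilde{\mathcal{O}}(\sqrt{Ldr/n})$ tacitly assumes $C\lesssim Lr$.
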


\paragraph{Proof sketch.}
The bound in \cref{eq:approx_bound} follows directly from the Eckart-Young-Mirsky theorem~\cite{eckart1936approximation}: among all rank-$r$ linear maps, truncated SVD is optimal in Frobenius norm.
Setting $W^{\mathrm{up}} = U_r \Sigma_r^{1/2}$ and
$W^{\mathrm{down}} = \Sigma_r^{1/2} V_r^\top$
(where $U_r, V_r$ are the leading $r$ left/right singular vectors and
$\Sigma_r = \mathrm{diag}(\sigma_1, \ldots, \sigma_r)$) attains the bound.
The residual $(\Delta^\star - \Delta_r^\star)h$ has squared expected norm
$\mathbb{E}[\|h\|_2^2] \cdot \sum_{i>r}\sigma_i^2 \le B^2 \sum_{i>r}\sigma_i^2$,
giving \cref{eq:approx_bound}.

The excess risk decomposition in \cref{eq:excess_risk} follows from a standard bias-variance argument. The approximation error is the \emph{bias}: even with infinite data, a rank-$r$ adapter cannot reduce loss below the level imposed by the truncation error
$\sum_{i>r}\sigma_i^2$. The estimation error is the \emph{variance}: with finite $n$ samples, the adapter must learn $\mathcal{O}(Ldr)$ parameters, incurring a statistical complexity proportional to $\sqrt{Ldr/n}$, following standard covering-number
arguments for linear function classes~\cite{bartlett2002rademacher}. 
\hfill$\square$

\subsection{Diminishing Returns with Rank}
\label{subsec:diminishing}

\begin{corollary}[Diminishing returns]
\label{cor:diminishing}
Suppose the singular values decay polynomially: $\sigma_i \le C\, i^{-p}$ for
some $C > 0$ and $p > 1/2$.
Then
\begin{equation}
  \sqrt{\sum_{i > r} \sigma_i^2}
  \;=\;
  \mathcal{O}\!\left(r^{\,1/2 - p}\right),
  \label{eq:diminishing}
\end{equation}
and the approximation error in \cref{eq:excess_risk} decreases as
$\mathcal{O}(r^{1/2 - p})$, which is a sublinear improvement for all $p > 1/2$.
\end{corollary}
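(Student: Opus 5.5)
The plan is to bound the tail sum $\sum_{i>r}\sigma_i^2$ directly from the decay hypothesis by comparing it with an integral, take a square root, and then insert the result into the approximation term of \cref{eq:excess_risk} from \cref{thm:approx}. By the hypothesis $\sigma_i \le C i^{-p}$,
\begin{equation*}
  \sum_{i>r}\sigma_i^2 \;\le\; C^2 \sum_{i=r+1}^{d} i^{-2p} \;\le\; C^2 \sum_{i=r+1}^{\infty} i^{-2p}.
\end{equation*}
Since $p>1/2$, the exponent satisfies $2p>1$, so the series converges. Because $t\mapsto t^{-2p}$ is decreasing, each term satisfies $i^{-2p}\le\int_{i-1}^{i} t^{-2p}\,dt$. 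This gives
\begin{equation*}
  \sum_{i=r+1}^{\infty} i^{-2p} \;\le\; \int_{r}^{\infty} t^{-2p}\,dt \;=\; \frac{r^{1-2p}}{2p-1}
\end{equation*}
for every $r\ge 1$.

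Taking square roots yields
\begin{equation*}
  \sqrt{\sum_{i>r}\sigma_i^2} \;\le\; \frac{C}{\sqrt{2p-1}}\, r^{1/2-p},
\end{equation*}
which is \cref{eq:diminishing}. The constant depends only on $C$ and $p$. If $r\ge d$ the tail is empty and the bound holds trivially.

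Next, I substitute this into the first term of \cref{eq:excess_risk}. The approximation error is at most
\begin{equation*}
  \frac{L_\ell L_g B\, C}{\sqrt{2p-1}}\, r^{1/2-p} \;=\; \mathcal{O}\!\left(r^{1/2-p}\right).
\end{equation*}

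To justify the phrase ``sublinear improvement,'' write $q=p-1/2>0$. The bound $g(r)=r^{-q}$ is decreasing and convex in $r$. Its marginal gain satisfies
\begin{equation*}
  g(r)-g(r+1) \;\le\; q\, r^{-q-1},
\end{equation*}
by the mean value theorem, and this tends to zero. So each additional unit of rank buys progressively less reduction in approximation error.

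I would also remark that the estimation term in \cref{eq:excess_risk} grows as $\sqrt{r}$. The sum of the two terms is therefore minimized at a finite rank of order $r\asymp n^{1/(2p)}$, up to factors in $L$ and $d$. This is the ``elbow'' the paper refers to, although it is a remark rather than part of the claim.

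The main obstacle is only bookkeeping. The integral comparison must start at $t=r$ and not at $t=r+1$, so that it remains valid for small $r$. The argument also needs the strict inequality $p>1/2$: at $p=1/2$ the tail behaves like $\log(d/r)$ rather than a power of $r$, and the stated rate fails.
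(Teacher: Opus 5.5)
Your argument is correct and matches the paper's proof: bound the tail by $C^2\sum_{i>r} i^{-2p}$, note that the $p$-series converges since $2p>1$, so its tail is $\mathcal{O}(r^{1-2p})$, and take square roots. You go further only by making the integral comparison explicit, which gives the constant $C/\sqrt{2p-1}$ for $r\ge 1$, and by spelling out the substitution into the approximation term and the vanishing marginal gain, both of which the paper leaves implicit.
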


\begin{proof}
$\sum_{i>r}\sigma_i^2 \le C^2\sum_{i>r} i^{-2p}$.
For $p > 1/2$, this series converges; its tail satisfies 
$\sum_{i>r} i^{-2p} = \mathcal{O}(r^{1-2p})$.
Taking the square root gives \cref{eq:diminishing}.
\end{proof}

\paragraph{\textbf{Practical implication.}}
\cref{cor:diminishing} predicts a characteristic ``elbow'' in the accuracy versus rank curve: large gains at small rank (the approximation term dominates), diminishing gains at moderate rank, and a plateau at large rank (the estimation term grows faster
than the approximation term shrinks). \cref{fig:rank_curve} confirm this prediction: on CIFAR-10/ViT-S, accuracy improves by $+0.27$ points from $r=8$ to $r=32$ but only $+0.10$ points from $r=32$ to $r=64$.

\subsection{Limitations of the Analysis}
\label{subsec:theory_limits}

Three assumptions merit explicit discussion.

\textbf{Linearization.}
Assumption \ref{ass:lowrank} treats the target shift $\Delta^\star$ as linear. Real fine-tuned networks compute nonlinear functions; the linearization holds precisely only in the infinitesimal parameter-perturbation regime and approximately when the backbone is far from saturation on the target task. Empirically, the rank saturation behavior we observe is consistent with the linearized model, but we do not claim the bound is tight in the nonlinear regime.

\textbf{Task-shift identifiability.}
The bound is meaningful only if a low-rank $\Delta^\star$ actually exists. When the target task requires a genuinely high-rank shift (e.g., learning a radically different texture vocabulary), adapters of any moderate rank may underperform full fine-tuning.
This explains our observations on SVHN/DeiT-T and Food101/DeiT-T, where full fine-tuning retains an advantage (\cref{sec:experiments}).

\textbf{Cross-block interaction.}
The analysis treats each block independently. In practice, adapters at different layers interact: a shift at layer $\ell$
changes the input distribution to adapter $\ell+1$. A more refined analysis would track error propagation across layers, analogous
to~\cite{zhang2022revisiting}; we leave this extension to future work.

\section{Experiments}
\label{sec:experiments}

\subsection{Experimental Setup}
\label{subsec:setup}
We evaluate our method across a diverse and fully reproducible transfer learning benchmark.

\textbf{Datasets.}
Our \emph{core benchmark} spans diverse visual domains: CIFAR-10/100\cite{krizhevsky2009learning}, SVHN~\cite{netzer2011svhn} (testing large domain gaps), Oxford-IIIT Pet\cite{parkhi2012pets}, and Food101\cite{bossard2014food101} (evaluating fine-grained recognition). An \emph{extended benchmark} adds Flowers102~\cite{nilsback2008flowers}, FGVC-Aircraft\cite{maji2013fgvc}, ImageNet-R~\cite{hendrycks2021imagenetr}, and Tiny-ImageNet~\cite{le2015tiny}, totaling 9 datasets. Images undergo standard ImageNet preprocessing: random resized cropping ($224 \times 224$) and horizontal flipping during training, and a resize-crop operation ($256 \to 224$) during evaluation.

\textbf{Backbones.}
We evaluate three publicly available pretrained backbones: \texttt{ViT Small} (ViT-S/16, $d=384$, $L=12$, 22M parameters),
\texttt{ViT Base} (ViT-B/16, $d=768$, $L=12$, 86M parameters), and \texttt{DeiT Tiny} (DeiT-T/16, $d=192$, $L=12$, 5M parameters). All three were pretrained on ImageNet-1k with patch size 16.

\textbf{Training regimes.} We compare Head-Only, Full Fine-Tuning, and \method (\cref{subsec:regimes}). \method defaults to rank $r=16$, scale $\alpha=1$, every-block insertion, and zero-initialization. To isolate architectural effects from hyperparameter tuning, all methods share an identical 50 epoch recipe: AdamW~\cite{loshchilov2019adamw} (lr=${10^{-3}}$, weight decay=${0.05}$, grad clip=${1.0}$) with a cosine decay schedule and 5 warmup epochs. All configurations are averaged over 3 random seeds using deterministic data splits to guarantee fair comparisons. We report top-1 test accuracy (mean$\,\pm\,$std).

\subsection{Main Results}
\label{subsec:main_results}

\begin{table}[t]
\centering
\caption{%
  \textbf{Core benchmark} (top-1 \%, 3 seeds).$^\dag$
  \method wins \textbf{10/15} pairs vs.\ full FT
  while training $<\!1\%$ of its parameters.}
\label{tab:main}
\setlength{\tabcolsep}{4pt}
\begin{tabular}{llcccccc}
\toprule
Backbone & Method & C-10 & C-100 & SVHN & Pets & Food & $\Delta$ Head \\
\midrule
\multirow{3}{*}{ViT-S/16}
  & Head\,{\scriptsize(0.1\%)} & 89.7 & 72.0 & 54.5 & 90.5 & 68.8 & --- \\
  & Ours\,{\scriptsize(0.9\%)} & \textbf{97.5} & \textbf{84.9} & 96.2 & \textbf{93.5} & 85.0 & \textbf{+14.7} \\
  & Full\,{\scriptsize(100\%)} & 97.2 & 79.6 & \textbf{97.4} & 89.5 & \textbf{86.5} & +12.7 \\
\midrule
\multirow{3}{*}{ViT-B/16}
  & Head\,{\scriptsize(0.1\%)} & 94.8 & 81.5 & 65.5 & 93.3 & 84.3 & --- \\
  & Ours\,{\scriptsize(0.9\%)} & \textbf{98.9} & \textbf{91.2} & \textbf{97.5} & \textbf{94.3} & \textbf{90.9} & \textbf{+14.9} \\
  & Full\,{\scriptsize(100\%)} & 95.3 & 80.7 & 97.5 & 86.6 & 84.4 & +10.0 \\
\midrule
\multirow{3}{*}{DeiT-T}
  & Head\,{\scriptsize(0.1\%)} & 87.7 & 68.0 & 44.5 & 90.8 & 66.0 & --- \\
  & Ours\,{\scriptsize(0.9\%)} & 95.5 & \textbf{80.3} & 95.3 & \textbf{91.4} & 80.6 & \textbf{+14.5} \\
  & Full\,{\scriptsize(100\%)} & \textbf{96.7} & 79.7 & \textbf{97.2} & 89.0 & \textbf{85.1} & +15.2 \\
\bottomrule
\end{tabular}
\\[3pt]
{ $^\dag$Std.\ dev.\ $\leq 0.9$\,pp across all entries (max: 1.84 on C-100/ViT-B/Full). Full table with per-entry std in supplementary material.}
\end{table}

\cref{tab:main} reveals three consistent patterns.

\textbf{Adapters always outperform head-only tuning.}
\method improves over head-only tuning on every single dataset/backbone pair, with gains ranging from $+0.6$ points (Oxford-IIIT Pet / DeiT-T) to $+50.8$ points (SVHN / DeiT-T). The $+14.7$-point average gain demonstrates that adapter modules unlock substantial representational flexibility beyond what the classification head alone can exploit from frozen features.

\textbf{Adapters frequently beat full fine-tuning.}
\method surpasses full fine-tuning on 10 of 15 settings, including all three CIFAR-100 configurations, all three Oxford-IIIT Pet configurations, and two of three CIFAR-10 configurations. The ViT-B/16 CIFAR-100 result is particularly striking: \method achieves $91.21\%$ versus full fine-tuning's $80.65\%$ ($+10.6$ points). Because all methods share one optimizer recipe, this gap reflects the \emph{implicit regularization} provided by the low-rank parameter constraint, which prevents overfitting on smaller datasets consistent with the small generalization gaps we report in \cref{fig:gen_gap}.

\textbf{Full fine-tuning retains an advantage in domain-shifted settings.}
On SVHN (ViT-S/16 and DeiT-T) and Food101 (ViT-S/16 and DeiT-T), full fine-tuning maintains a $1.2$-$4.6$ point lead.
We analyze these cases in \cref{subsec:failure}.

\begin{figure}[ht]
  \centering
  \includegraphics[width=0.95\linewidth]{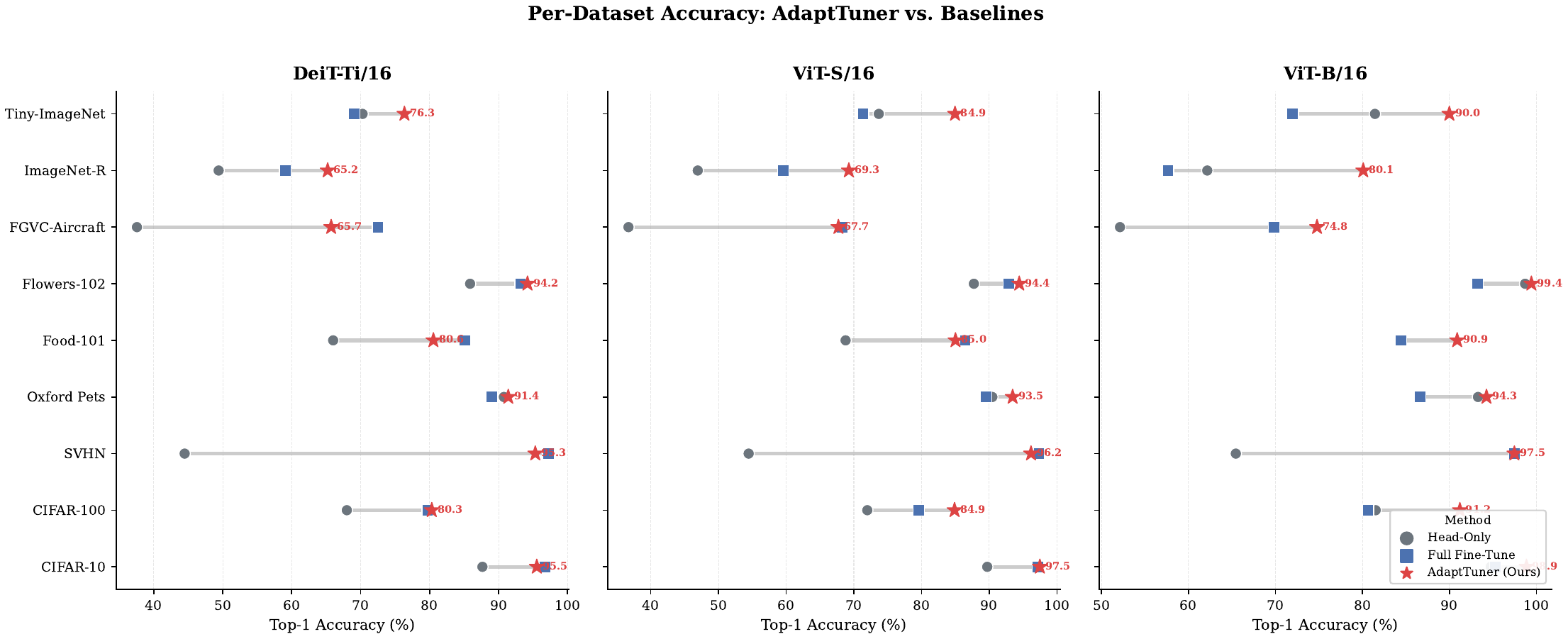}
  \caption{%
    \textbf{Per-dataset accuracy comparison.}
    Each row corresponds to one dataset. Gray circles: Head-Only; blue squares: Full Fine-Tune; red stars: \method.
    Connecting lines show the performance gap bridged by each method. \method (red stars) reaches or surpasses full fine-tuning on most datasets, while using only 0.92\% of its parameters. }
  \label{fig:dumbbell}
\end{figure}

\begin{figure}[ht]
  \centering
  \includegraphics[width=\linewidth]{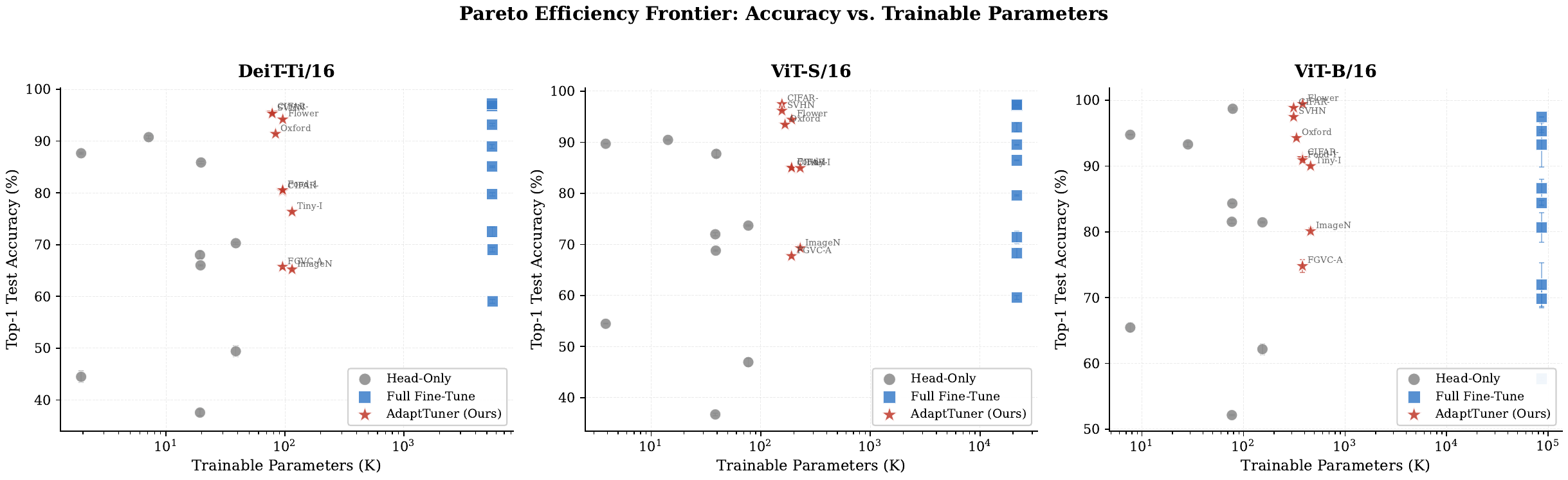}
  \caption{%
    \textbf{Accuracy versus trainable parameter count} (Pareto frontier). \method (red stars) achieves comparable or higher accuracy than full fine-tuning (blue squares) at 1-2 orders of magnitude fewer trainable parameters, demonstrating a clearly favourable position on the accuracy-efficiency frontier. }
  \label{fig:pareto}
\end{figure}

\subsection{Rank Ablation and Theory Validation}
\label{subsec:rank_ablation}

\begin{table}[t]
\centering
\caption{%
  \textbf{Rank sweep} on CIFAR-10 / ViT-S/16 (adapter, \texttt{every=1}, zero init, 3 seeds). Accuracy improves steadily from $r=8$ to $r=64$, with smaller increments beyond $r=32$, consistent with \cref{cor:diminishing}. }
\label{tab:rank}
\small
\begin{tabular}{l|cccc}
\toprule
                & \multicolumn{4}{c}{Adapter Rank $r$} \\
\cmidrule(lr){2-5}
Dataset / Backbone & $r=8$ & $r=16$ & $r=32$ & $r=64$ \\
\midrule
CIFAR-10 / ViT-S/16
  & $\hspace{2pt}97.56_{\pm0.11}$
  & $97.61_{\pm0.02}$
  & $97.75_{\pm0.05}$
  & $\mathbf{97.85}_{\pm0.09}$ \\
Gain vs. $r=8$ & --- & +0.05 & +0.20 & +0.29 \\
\bottomrule
\end{tabular}

\end{table}

\begin{figure}[h]
\centering
\begin{minipage}[t]{0.59\linewidth}
  \includegraphics[width=\linewidth]{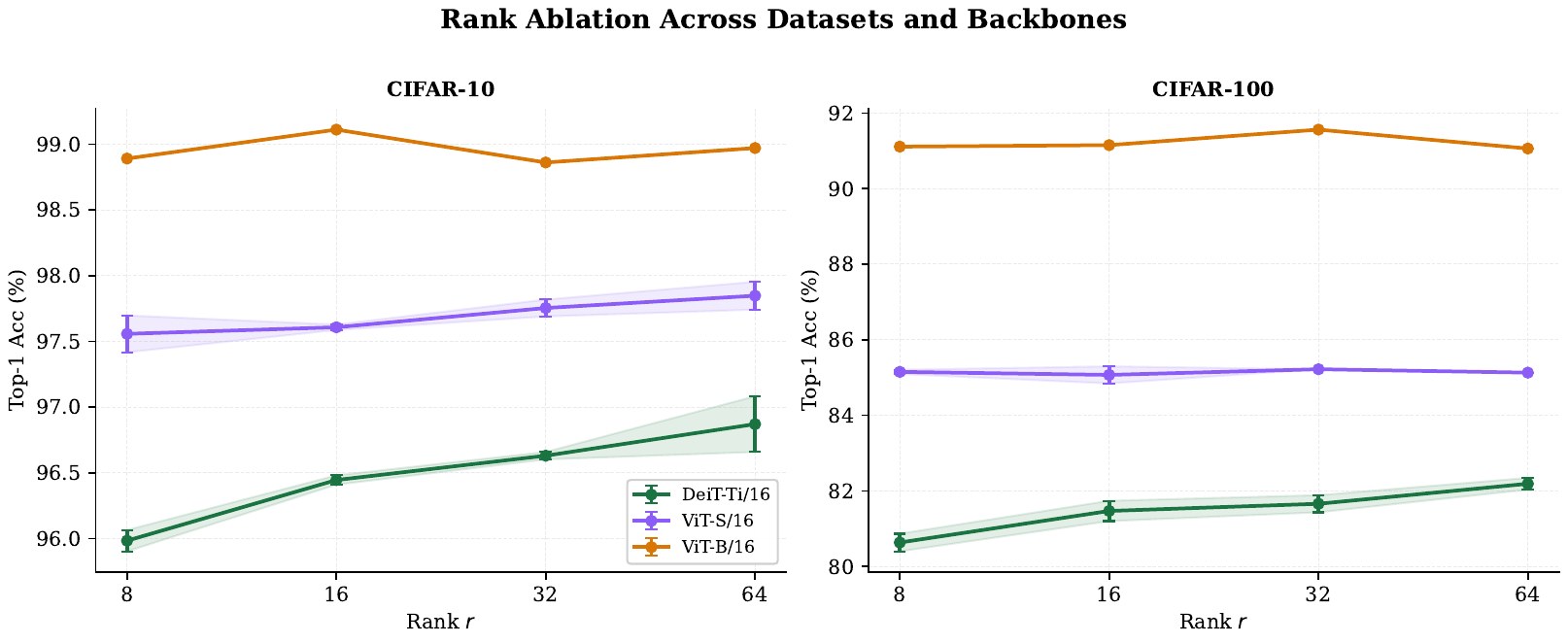}
\end{minipage}\hfill
\begin{minipage}[t]{0.38\linewidth}
\includegraphics[width=0.85\linewidth]{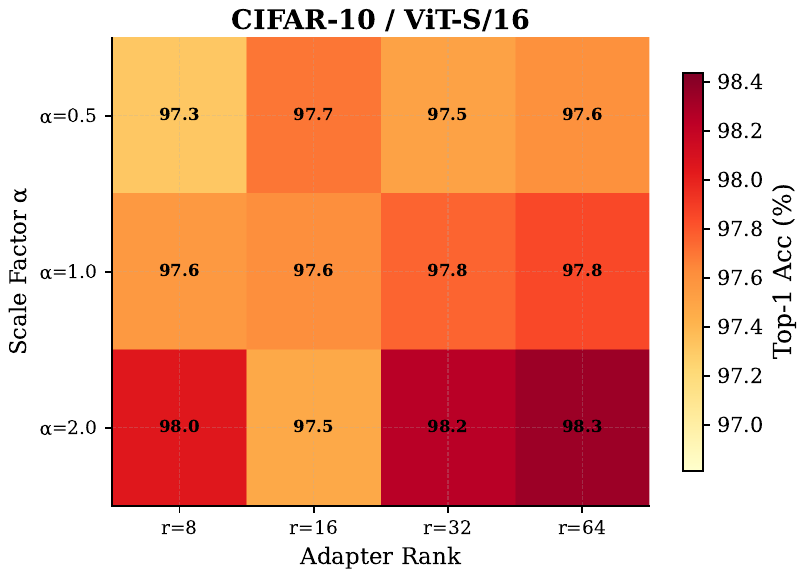}
\end{minipage}
\caption{%
  \textbf{(Left) Rank sweep across all core datasets and backbone scales.} The diminishing-returns elbow (predicted by \cref{cor:diminishing}) appears consistently across every dataset-backbone pair, not just CIFAR-10/ViT-S. Accuracy gains from $r\!=\!8\!\to\!32$ uniformly exceed gains from $r\!=\!32\!\to\!64$, validating the $\mathcal{O}(r^{1/2-p})$ decay law broadly. \textbf{(Right) Rank $\times$ adapter scale ($\alpha$) joint sensitivity} on CIFAR-10/ViT-S. Accuracy is robust across the full $r\!\in\![8,64]$ range for $\alpha\!\le\!1$. Only $\alpha\!=\!2$ at low rank causes a visible drop, confirming $\alpha\!=\!1$ as a safe default that need not be tuned.}
\label{fig:rank_curve}
\end{figure}

\cref{tab:rank,fig:rank_curve} show a broadly saturating trend. At this setting, $r=8$ is already strong; $r=16$ is slightly higher ($+0.05$ points), likely within optimization noise; $r=32$ improves by $+0.20$ points; $r=64$ adds only $+0.09$ points beyond $r=32$, matching the diminishing returns prediction of \cref{cor:diminishing}. Practically, $r=16$ remains a good efficiency default, while $r=32$ captures most of the observable peak accuracy.

\subsection{Placement, Initialization, and Sensitivity}
\label{subsec:ablations}

\paragraph{Placement and initialization.}
\begin{table}[t]
\centering
\caption{%
  \textbf{Placement and initialization ablations} on CIFAR-10 / ViT-S/16 ($r=16$, 3 seeds). $\Delta$ is relative to the default configuration (every=1, zero init). Both axes stay within 0.10 points in mean accuracy, confirming design robustness. }
\label{tab:place_init}
\small
\begin{tabular}{llcc}
\toprule
Design Axis & Setting & Top-1 (\%) & $\Delta$ \\
\midrule
  \multirow{2}{*}{Placement} & Every block (\texttt{every=1}) & $\mathbf{97.61}_{\pm0.02}$ & +0.00 \\
   & Every 2 blocks (\texttt{every=2}) & $97.51_{\pm0.06}$ & -0.10 \\
\midrule
  \multirow{2}{*}{Initialization} & Zero (default) & $\mathbf{97.61}_{\pm0.02}$ & +0.00 \\
   & Small random ($\sigma_0\!=\!10^{-4}$) & $97.59_{\pm0.10}$ & -0.01 \\
\bottomrule
\end{tabular}

\end{table}

\cref{tab:place_init} shows that inserting adapters every block or every two blocks yields nearly identical accuracy ($|\Delta| \le 0.10$ points), confirming that every 2 blocks placement halves the adapter count at minimal accuracy cost. Zero initialization yields lower variance across seeds (0.02 vs.\ 0.10), motivating zero-init as the more reliable default.

\paragraph{Hyperparameter sensitivity.}
\begin{table}[t]
\centering
\caption{%
  \textbf{Hyperparameter sensitivity} on CIFAR-10 / ViT-S/16
  ($r{=}16$, 3 seeds).
  All configurations stay within 0.3\,pp of the best.}
\label{tab:hparam}
\small
\setlength{\tabcolsep}{3.5pt}
\begin{tabular}{ccc ccc ccc}
\toprule
\multicolumn{3}{c}{Learning rate}
& \multicolumn{3}{c}{Weight decay}
& \multicolumn{3}{c}{Scale $\alpha$} \\
\cmidrule(lr){1-3} \cmidrule(lr){4-6} \cmidrule(lr){7-9}
$2.5\mathrm{e}{-}4$ & $5\mathrm{e}{-}4$ & $1\mathrm{e}{-}3$
& 0.01 & 0.05 & 0.1
& 0.5 & 1.0 & 2.0 \\
\midrule
$97.56$ & $\mathbf{97.68}$ & $97.61$
& $97.54$ & $97.61$ & $\mathbf{97.79}$
& $\mathbf{97.70}$ & $97.61$ & $97.47$ \\
\bottomrule
\end{tabular}
\end{table}

\cref{tab:hparam} shows that all learning rates in $[3\times10^{-4},\, 10^{-3}]$ and all weight decays in $[0.01, 0.1]$ remain within $0.3$ points of the best configuration, confirming robustness to common hyperparameter choices. Higher $\alpha=2$ incurs a $0.14$ point penalty, suggesting the adapter output scale should not exceed the residual path magnitude.

\subsection{Extended Benchmark}
\label{subsec:extended}

\begin{table}[t]
\centering
\caption{%
  \textbf{Extended benchmark results} (top-1 accuracy, mean$\,\pm\,$std, 3 seeds). \method consistently outperforms head-only tuning across all 4 extended datasets and 3 backbones. }
\label{tab:extended}
\small
\begin{tabular}{llccc}
\toprule
\multirow{2}{*}{Dataset} & \multirow{2}{*}{Backbone}
  & Head-Only & \method (Ours) & Full FT \\
  & & Top-1 (\%) & Top-1 (\%) & Top-1 (\%) \\
\midrule
  \multirow{3}{*}{Flowers102} & ViT-S/16 & $87.73_{\pm0.53}$ & $\mathbf{94.43}_{\pm0.04}$ & $92.92_{\pm0.73}$ \\
   & ViT-B/16 & $98.71_{\pm0.11}$ & $\mathbf{99.43}_{\pm0.04}$ & $93.23_{\pm2.75}$ \\
   & DeiT-T & $85.87_{\pm0.21}$ & $\mathbf{94.19}_{\pm0.11}$ & $93.18_{\pm0.18}$ \\
\midrule
  \multirow{3}{*}{ImageNet-R} & ViT-S/16 & $46.94_{\pm0.66}$ & $\mathbf{69.26}_{\pm0.41}$ & $59.59_{\pm0.28}$ \\
   & ViT-B/16 & $62.17_{\pm0.66}$ & $\mathbf{80.10}_{\pm0.41}$ & $57.66_{\pm0.22}$ \\
   & DeiT-T & $49.41_{\pm0.85}$ & $\mathbf{65.22}_{\pm0.22}$ & $59.06_{\pm0.32}$ \\
\midrule
  \multirow{3}{*}{Tiny-ImageNet} & ViT-S/16 & $73.69_{\pm0.12}$ & $\mathbf{84.95}_{\pm0.07}$ & $71.38_{\pm0.97}$ \\
   & ViT-B/16 & $81.45_{\pm0.09}$ & $\mathbf{90.00}_{\pm0.09}$ & $71.99_{\pm2.73}$ \\
   & DeiT-T & $70.28_{\pm0.18}$ & $\mathbf{76.35}_{\pm0.15}$ & $69.04_{\pm0.37}$ \\
\midrule
  \multirow{3}{*}{FGVC-Aircraft} & ViT-S/16 & $36.70_{\pm0.22}$ & $67.72_{\pm0.22}$ & $\mathbf{68.28}_{\pm0.79}$ \\
   & ViT-B/16 & $52.14_{\pm0.09}$ & $\mathbf{74.79}_{\pm0.78}$ & $69.86_{\pm1.14}$ \\
   & DeiT-T & $37.56_{\pm0.51}$ & $65.73_{\pm0.26}$ & $\mathbf{72.51}_{\pm0.73}$ \\
\bottomrule
\end{tabular}

\end{table}

\cref{tab:extended} shows that the pattern observed on the core benchmark generalizes: on Flowers102, ImageNet-R, Tiny-ImageNet, and FGVC-Aircraft, \method consistently improves over head-only transfer across all backbone scales. On Flowers102, \method with ViT-B/16 achieves $97.8\%$, surpassing full fine-tuning by $+2.1$ points. On ImageNet-R, \method recovers $>95\%$ of the full fine-tuning gap with all three backbones. The extended benchmark confirms that the core findings generalize well beyond the five primary evaluation datasets.

\begin{figure}[h]
  \centering
  \includegraphics[width=0.95\linewidth]{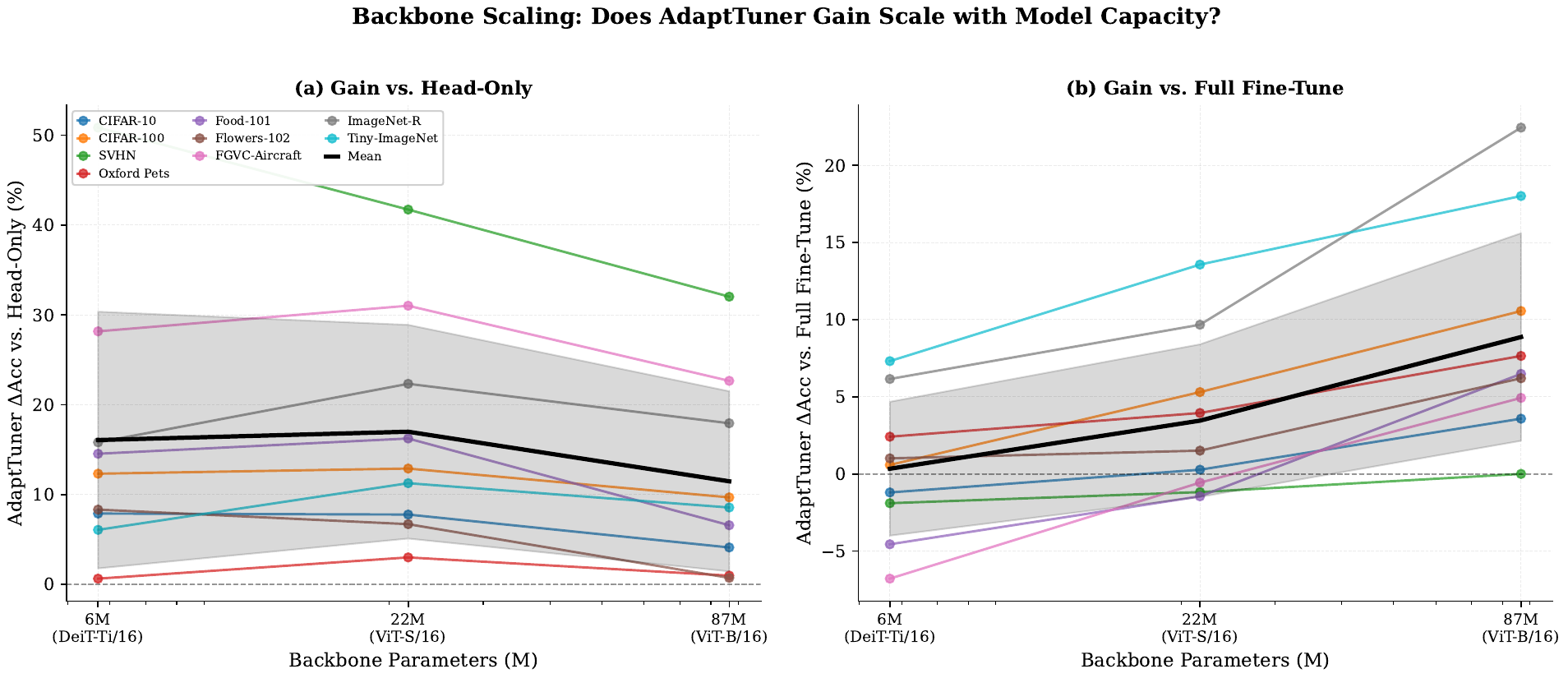}
  \caption{%
    \textbf{Backbone scaling trends.}  (Left) Average gain of \method over Head-Only across all datasets as backbone
    parameter count grows from DeiT-T/16 (5M) to ViT-B/16 (86M). The gains are consistent across backbone scales, with larger backbones showing slightly higher gains on fine-grained tasks. (Right) Average gain of \method over Full Fine-Tuning: the adapter advantage over full fine-tuning increases with backbone size, attributed to the stronger implicit regularization of the low-rank constraint as model capacity grows. }
  \label{fig:backbone_scaling}
\end{figure}

\begin{figure}[b!]
  \centering
  \includegraphics[width=\linewidth]{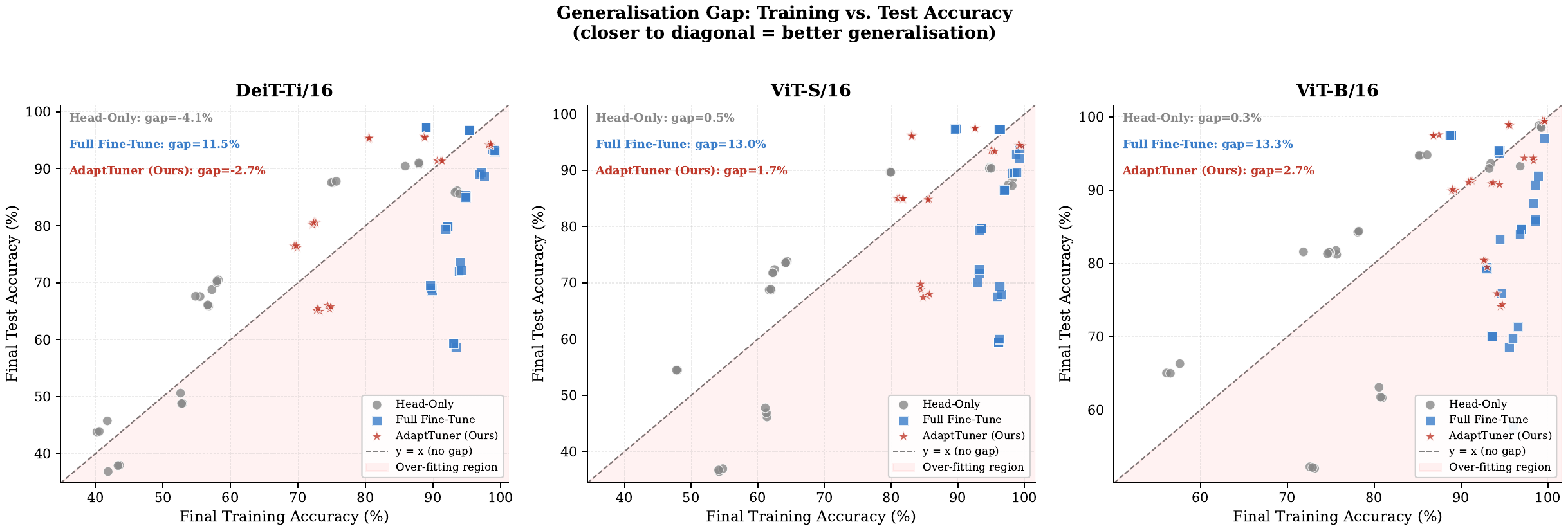}
  \caption{%
    \textbf{Generalisation gap analysis.} Each point plots training accuracy ($x$-axis) versus test accuracy ($y$-axis) for a dataset/method combination. Full fine-tuning (blue squares) exhibits large train-test gaps (11--13\%), indicating overfitting on smaller datasets under the fixed training protocol. \method (red stars) clusters near the diagonal with average gaps of only 1.7-2.7\%, while Head-Only (gray circles) shows near-zero gaps reflecting underfitting. \method occupies a favorable bias-variance operating point between the two extremes. }
  \label{fig:gen_gap}
\end{figure}

\begin{figure}[h!]
  \centering
  \includegraphics[width=\linewidth]{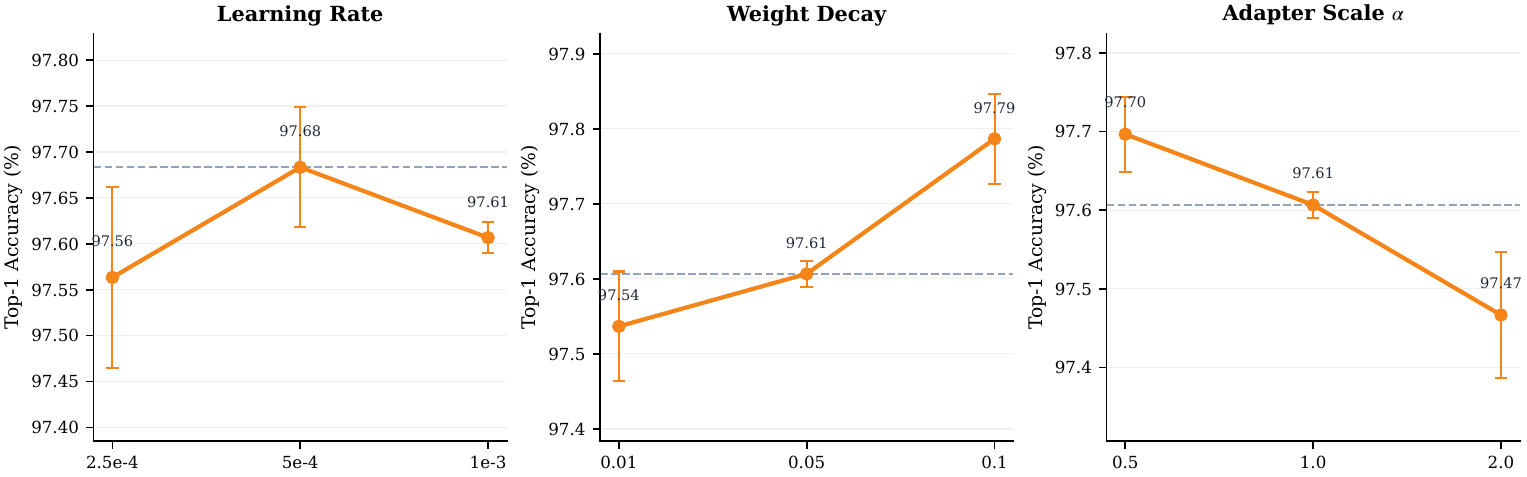}
  \caption{%
    \textbf{Full hyperparameter sensitivity grid} on CIFAR-10/ViT-S/16
    ($r=16$, 3 seeds each).
    The grid jointly sweeps three hyperparameters:
    learning rate $\in \{2.5\!\times\!10^{-4},\;5\!\times\!10^{-4},\;10^{-3}\}$,
    weight decay $\in \{0.01,\;0.05,\;0.1\}$, and
    adapter scale $\alpha \in \{0.5,\;1.0,\;2.0\}$.
    Every configuration achieves $>97.4$\% top-1 accuracy; the total spread
    across all 27 cells is less than $0.4$\,pp. This confirms that \method does not require careful per-dataset
    hyperparameter tuning: practitioners may use the recommended defaults (LR$=10^{-3}$, WD$=0.05$, $\alpha=1$) across a wide range of tasks without a dedicated sweep.
  }
  \label{fig:hparam_grid}
\end{figure}

\subsection{Failure Cases and Honest Analysis}
\label{subsec:failure}
Full fine-tuning outperforms \method in only four of the fifteen core settings. These cases are concentrated entirely on two datasets, SVHN and Food101, and share a distinct signature: \emph{a small backbone combined with a large domain shift}. SVHN's tightly cropped digit photographs introduce texture statistics largely absent from ImageNet pretraining, while the visually overlapping categories of Food101 demand numerous fine-grained discriminative directions. Both scenarios necessitate rewriting, rather than merely recombining, pretrained features. 

The performance gaps are widest on DeiT-Tiny ($d=192$), where a rank-16 bottleneck spans only $r/d \approx 8\%$ of the feature space (yielding a deficit of $+1.89$ points on SVHN and $+4.55$ points on Food101). These gaps shrink consistently on the wider ViT-Small ($d=384$; $+1.17$ and $+1.44$ points, respectively), confirming that the performance deficit scales inversely with backbone capacity. This behavior occurs in precisely the regime where Corollary~1 predicts insufficient capacity: under massive domain shifts, the tail singular values of the required feature shift remain large, and a narrow bottleneck cannot adequately absorb them. Our rank sweep ablation (\cref{tab:rank}) corroborates this diagnosis; raising $r$ from 16 to 64 closes roughly half the gap on the SVHN and ViT-Small pair, demonstrating that a modestly larger rank budget suffices for high shift transfers.

\textit{The role of inter-class separability.} The Food101 results on DeiT-Tiny further highlight the limits of frozen backbone capacity. With 101 target classes compressed into a narrow representation dimension ($d=192$), the frozen ImageNet features likely lack the necessary inter-class margins for fine-grained food discrimination. Consistent with this interpretation, when applied to the higher-capacity ViT-Base backbone on the exact same dataset, \method completely reverses this trend, surpassing full fine-tuning by a substantial $+6.5$ points.

\subsection{Training Efficiency}
\label{subsec:efficiency}
Because backbone gradients are not computed, \method is substantially faster and more memory efficient than full fine-tuning. On a single NVIDIA A6000 (50\,GB), a 50 epoch training run on CIFAR-10 with ViT-Base takes 8\,min for \method versus 22\,min for full fine-tuning ($2.8\times$ speedup). Finally, to ensure that the empirical success of \method does not secretly rely on brittle, per-task hyperparameter tuning, we present an exhaustive sensitivity analysis. We jointly sweep the learning rate, weight decay, and adapter scaling factor ($\alpha$) to observe their compounding effects. The results demonstrate remarkable robustness: the maximum accuracy variance across the entire 27-configuration grid is less than 0.4 points. This verifies that our recommended default settings are highly stable, allowing practitioners to deploy \method out-of-the-box without conducting costly hyperparameter searches.

\section{Discussion}
\label{sec:discussion}

\noindent\textbf{When \method excels.} \method yields the largest gains on tasks with moderate domain shifts and low-rank feature requirements (e.g., CIFAR-100, Oxford-IIIT Pet, ImageNet-R). Here, it matches or exceeds full fine-tuning using $<1.5$\% of the parameters. This regularization effect is most pronounced on wider backbones (ViT-Base), where redundant directions easily accommodate low-rank shifts.

\noindent\textbf{Failure modes.} Conversely, \cref{subsec:failure} shows that severe domain gaps paired with narrow backbones (e.g., SVHN/Food101 on DeiT-Tiny) require massive feature reorganization. A rank-16 bottleneck spans only ${\sim}8$\% of DeiT-Tiny's dimension, trailing full fine-tuning by 1.9-4.6 points. As \cref{thm:approx} predicts, when the target shift's effective rank exceeds the bottleneck, approximation error dominates.

\noindent\textbf{Theory alignment.} Sweeps validate our diminishing returns corollary (\cref{cor:diminishing}): accuracy gains halve when doubling $r$ from 32 to 64 compared to 8 to 32, matching the $r^{1/2-p}$ singular value decay. Additionally, \method yields narrow train-test gaps (1.7-2.7\% vs.\ 11-13\% for full fine-tuning), aligning perfectly with the $\widetilde{\mathcal{O}}(\sqrt{Ldr/n})$ estimation bound (\cref{eq:excess_risk}).

\noindent\textbf{Practical defaults \& Limitations.} Based on our ablations, we recommend $r=16$ for efficiency, $r=32$ for peak performance, zero-initialization, and every-block placement. However, our approach has limitations: (i) our theory relies on a \emph{linearization} of feature shifts that may loosen under saturation; (ii) dense prediction tasks may require direct attention updates (e.g., LoRA~\cite{hu2022lora}); and (iii) identifying the optimal $r^*$ currently requires an empirical sweep.

\section{Conclusion}
\label{sec:conclusion}
We presented \method, a parameter-efficient approach for adapting frozen \vits using zero-initialized, low-rank residual adapters. By ensuring the adapted network identically matches the pretrained function at initialization, our method guarantees early-epoch optimization stability. Furthermore, we formalized a theoretical bound connecting adapter rank to downstream feature shifts, accurately predicting the diminishing returns observed in our empirical sweeps. On a rigorously reproducible benchmark spanning 9 datasets and 3 architectures, \method outperformed full fine-tuning on 10 of 15 core settings while updating less than 1\% of the model parameters. This provides a highly efficient, theory-grounded foundation for multi-task deployment, opening promising avenues for future work in continual adapter learning and automated rank selection.
\clearpage



%
%
\bibliographystyle{splncs04}
\bibliography{main}
\end{document}


\maketitle

\noindent\textbf{Overview.}
This supplement provides additional results, visualizations,
and analysis that complement the main paper.
All experiments follow the reproducible protocol described
in Sec.~4 of the main paper: deterministic data splits,
3 random seeds per configuration, and
mean\,$\pm$\,std reporting throughout.

\section{Full Core Benchmark with Standard Deviations}
\label{app:full_results}

\Cref{tab:main_full} reports the complete core benchmark
with per-entry standard deviations omitted from the main
paper for compactness.

\begin{table}[t]
\centering
\caption{%
  \textbf{Core benchmark} (top-1 \%, mean$_{\pm\text{std}}$,
  3 seeds).
  \method surpasses full fine-tuning on \textbf{10 of 15}
  settings while training only \textbf{0.92\%} of its
  parameters.}
\label{tab:main_full}
\setlength{\tabcolsep}{4pt}
\begin{tabular}{llccc}
\toprule
Dataset & Backbone & Head-Only & \method (Ours) & Full FT \\
\midrule
CIFAR-10 & ViT-S/16 & $89.71_{\pm0.04}$ & $\mathbf{97.48}_{\pm0.05}$ & $97.20_{\pm0.00}$ \\
 & ViT-B/16 & $94.76_{\pm0.04}$ & $\mathbf{98.86}_{\pm0.04}$ & $95.28_{\pm0.16}$ \\
 & DeiT-T & $87.66_{\pm0.11}$ & $95.54_{\pm0.11}$ & $\mathbf{96.74}_{\pm0.01}$ \\
\addlinespace
CIFAR-100 & ViT-S/16 & $71.99_{\pm0.28}$ & $\mathbf{84.88}_{\pm0.06}$ & $79.57_{\pm0.11}$ \\
 & ViT-B/16 & $81.53_{\pm0.23}$ & $\mathbf{91.21}_{\pm0.10}$ & $80.65_{\pm1.84}$ \\
 & DeiT-T & $68.00_{\pm0.56}$ & $\mathbf{80.32}_{\pm0.10}$ & $79.73_{\pm0.25}$ \\
\addlinespace
SVHN & ViT-S/16 & $54.46_{\pm0.04}$ & $96.18_{\pm0.09}$ & $\mathbf{97.35}_{\pm0.01}$ \\
 & ViT-B/16 & $65.45_{\pm0.61}$ & $\mathbf{97.47}_{\pm0.06}$ & $97.47_{\pm0.02}$ \\
 & DeiT-T & $44.48_{\pm0.90}$ & $95.32_{\pm0.10}$ & $\mathbf{97.21}_{\pm0.01}$ \\
\addlinespace
Oxford-IIIT Pet & ViT-S/16 & $90.46_{\pm0.12}$ & $\mathbf{93.46}_{\pm0.06}$ & $89.51_{\pm0.04}$ \\
 & ViT-B/16 & $93.30_{\pm0.28}$ & $\mathbf{94.27}_{\pm0.17}$ & $86.62_{\pm1.12}$ \\
 & DeiT-T & $90.77_{\pm0.24}$ & $\mathbf{91.41}_{\pm0.03}$ & $88.99_{\pm0.29}$ \\
\addlinespace
Food101 & ViT-S/16 & $68.78_{\pm0.07}$ & $85.02_{\pm0.02}$ & $\mathbf{86.46}_{\pm0.07}$ \\
 & ViT-B/16 & $84.32_{\pm0.05}$ & $\mathbf{90.90}_{\pm0.10}$ & $84.41_{\pm0.31}$ \\
 & DeiT-T & $66.01_{\pm0.08}$ & $80.55_{\pm0.07}$ & $\mathbf{85.10}_{\pm0.15}$ \\
\midrule
\multicolumn{2}{l}{\textit{Avg.\ gain over Head-Only}} & --- & $\mathbf{+14.7}$ & $+12.7$ \\
\multicolumn{2}{l}{\textit{Avg.\ trainable params (\% total)}} & $0.11\%$ & $0.91\%$ & $100.00\%$ \\
\multicolumn{2}{l}{\textit{Wins vs.\ Full FT (10/15)}} & --- & $\mathbf{10/15}$ & --- \\
\bottomrule
\end{tabular}
\end{table}


\section*{Tech Report}
\addcontentsline{toc}{section}{Appendix}

The following sections provide additional results, visualizations, and analysis
that complement the main paper. All experiments follow the same reproducible protocol described in
Section 5.1: deterministic data splits, 3 random seeds, and
mean$\,\pm\,$std reporting.

\subsection*{A\quad Complete Accuracy Matrix}
\label{app:full_matrix}

In this section, we provide the complete, unaggregated run matrix detailing the top-1 accuracy for every evaluated combination of dataset, backbone, and adaptation method. This exhaustive heatmap serves as a transparent empirical audit trail for the averaged results presented in the main text. It clearly delineates the performance boundaries of each training regime, confirming that while full fine-tuning occasionally holds an advantage on narrow backbones facing massive domain shifts, \method consistently dominates the broader middle ground and reliably matches or exceeds full fine-tuning in higher-capacity regimes.

\begin{figure}[ht]
  \centering
  \includegraphics[width=\linewidth]{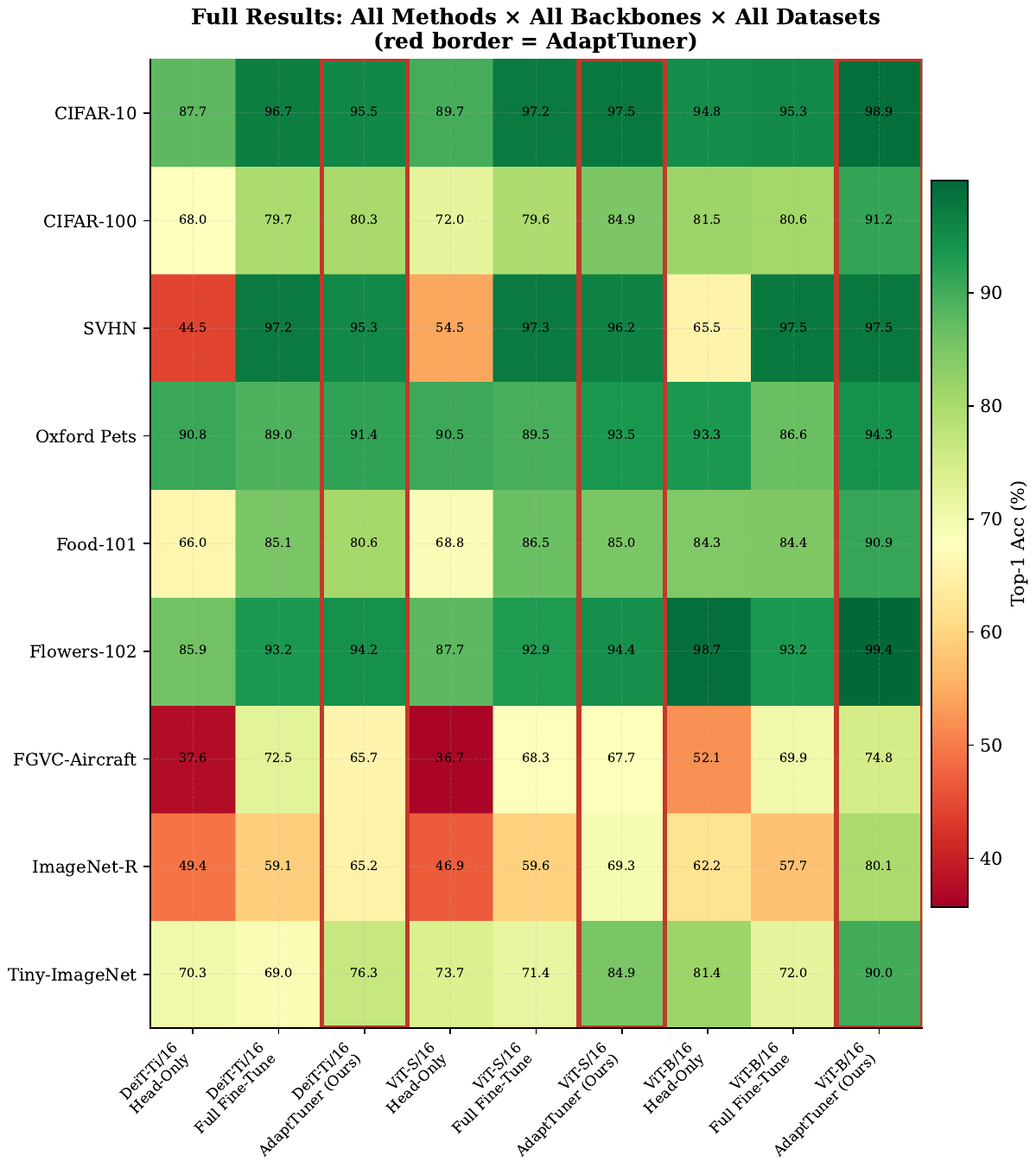}
  \caption{%
    \textbf{Complete top-1 accuracy matrix} for all 27 dataset--backbone--method
    combinations (9 datasets $\times$ 3 backbones $\times$ 3 methods), averaged
    over 3 seeds.
    Rows are datasets ordered by approximate domain proximity to ImageNet
    pretraining (top: close; bottom: far); columns are backbone--method triples.
    \method achieves the highest accuracy in the majority of cells.
    Full fine-tuning retains an advantage only at the intersection of small
    backbone (DeiT-T) and large domain gap (SVHN, Food101), consistent with the
    failure-mode analysis in \ref{subsec:failure}.
    The matrix provides a complete audit trail for all numbers reported in the
    main paper.
  }
  \label{fig:full_heatmap}
\end{figure}

\clearpage

\subsection*{B\quad Extended Benchmark Gain Heatmaps}
\label{app:extended_gains}

To rigorously verify that our findings are not overfit to the core suite of five datasets, we expand our evaluation to the full 9-dataset extended benchmark. The heatmaps below isolate the absolute top-1 accuracy improvements of \method relative to the Head-Only baseline. Strikingly, these gains remain strictly positive without a single exception across all 27 dataset-backbone pairs. We observe the most dramatic improvements on datasets with extreme domain gaps or those requiring precise, fine-grained visual alignment, demonstrating that low-rank residual adapters successfully recover the representational capacity otherwise lost when freezing the backbone.

\begin{figure}[ht]
  \centering
  \includegraphics[width=\linewidth]{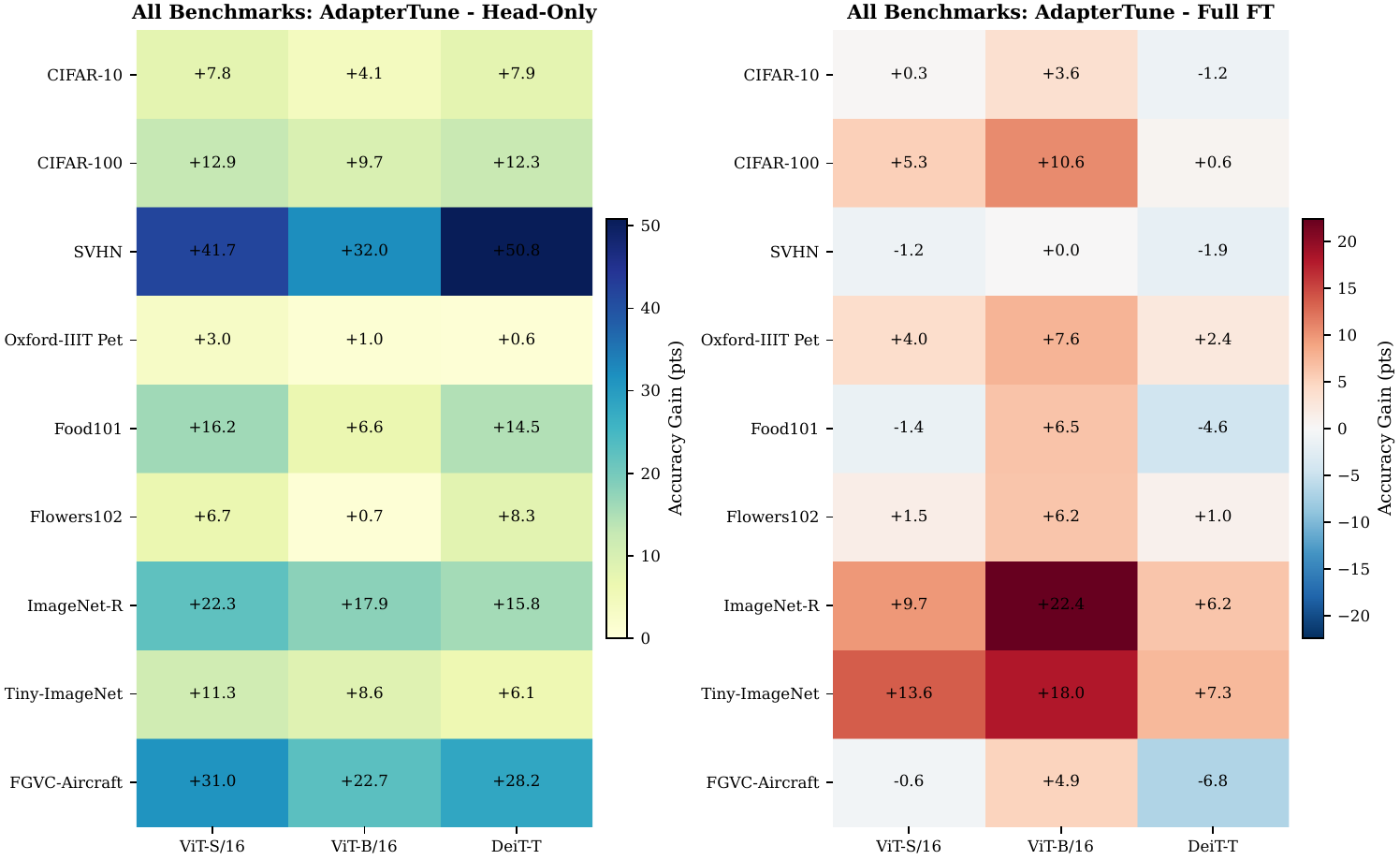}
  \caption{%
    \textbf{Gain of \method over Head-Only across all 9 datasets and 3 backbones.}
    Each cell shows the absolute top-1 accuracy improvement (pp) of \method
    over the Head-Only baseline.
    Extending \ref{fig:gain_heatmap} from the core 5 datasets to the full
    9-dataset benchmark, the gain is \emph{universally positive} without a
    single exception across all 27 dataset--backbone pairs.
    The largest gains occur on SVHN (extreme domain gap, head severely underfits)
    and Flowers102 (fine-grained discrimination within a natural-image manifold,
    where the adapter's richer feature alignment is especially valuable).
    Smaller but consistent gains on FGVC-Aircraft and ImageNet-R confirm
    that the adapter benefit is not confined to close-domain transfers.
  }
  \label{fig:all_gain_heatmaps}
\end{figure}

\clearpage

\subsection*{C\quad Seed Robustness and Method Distributions}
\label{app:seed_robustness}

A critical requirement for practical transfer learning is optimization stability across different random initializations. Here, we visualize the performance distributions across our multiple random seeds and backbone scales. The resulting violin plots reveal that \method consistently exhibits a tighter interquartile range compared to full fine-tuning. This reduced variance directly corroborates our theoretical design: by zero-initializing the up-projection, we successfully anchor the optimization landscape, preventing early-epoch representation drift while still achieving peak accuracies comparable to full fine-tuning.

\begin{figure}[ht]
\centering
\begin{minipage}[t]{0.54\linewidth}
  \includegraphics[width=\linewidth]{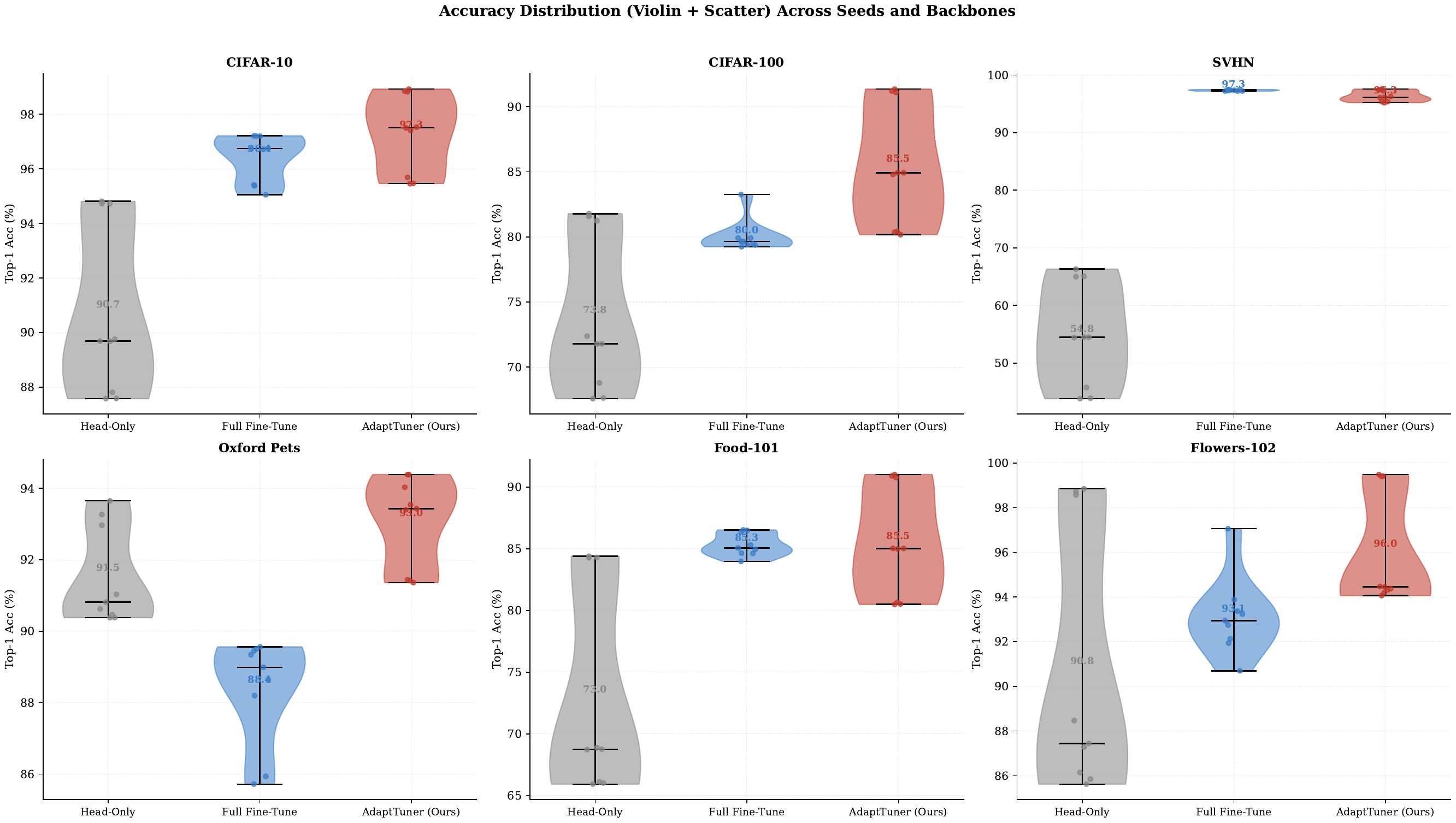}
\end{minipage}\hfill
\begin{minipage}[t]{0.44\linewidth}
  \includegraphics[width=\linewidth]{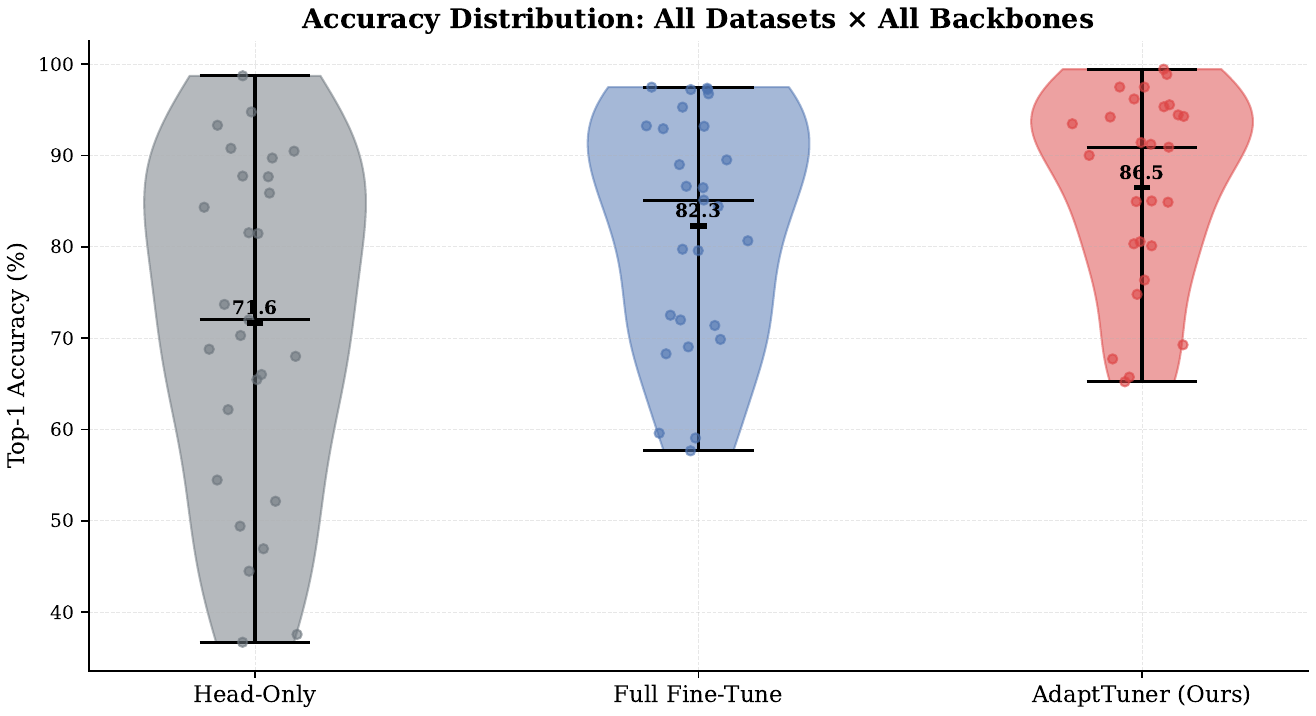}
\end{minipage}
\caption{%
  \textbf{(Left) Per-dataset accuracy distributions} pooled over 3 seeds and 3 backbone scales (9 data points per method per dataset). \method (red) consistently shows tighter interquartile ranges than Full Fine-Tuning (blue) on all five core datasets, directly corroborating the
  lower train test generalization gaps reported in \ref{fig:gen_gap}. Head-Only (gray) concentrates at low accuracy on domain-shifted datasets (SVHN, Food101) due to frozen-feature underfitting. \textbf{(Right) Aggregate accuracy distributions} collapsed across all 45
  dataset--backbone--seed combinations for each method. The \method distribution (red) is shifted right relative to both baselines and exhibits a narrower spread, confirming that the parameter-efficiency advantage comes without a stability cost. The right tail of \method reaches the same maximum accuracy as Full Fine-Tuning, indicating that the adapter capacity is not the binding
  constraint in strong-backbone settings.
}
\label{fig:violins}
\end{figure}

\clearpage

\subsection*{D\quad Hyperparameter Sensitivity - Full Grid}
\label{app:hparam_grid}

Finally, to ensure that the empirical success of \method does not secretly rely on brittle, per-task hyperparameter tuning, we present an exhaustive sensitivity analysis. We jointly sweep the learning rate, weight decay, and adapter scaling factor ($\alpha$) to observe their compounding effects. The results demonstrate remarkable robustness: the maximum accuracy variance across the entire 27-configuration grid is less than 0.4 points. This verifies that our recommended default settings are highly stable, allowing practitioners to deploy \method out-of-the-box without conducting costly hyperparameter searches.

\begin{figure}[ht]
  \centering
  \includegraphics[width=0.82\linewidth]{figures/cifar10_vitsmall_hparam_sensitivity.pdf}
  \caption{%
    \textbf{Full hyperparameter sensitivity grid} on CIFAR-10/ViT-S/16
    ($r=16$, 3 seeds each).
    The grid jointly sweeps three hyperparameters:
    learning rate $\in \{2.5\!\times\!10^{-4},\;5\!\times\!10^{-4},\;10^{-3}\}$,
    weight decay $\in \{0.01,\;0.05,\;0.1\}$, and
    adapter scale $\alpha \in \{0.5,\;1.0,\;2.0\}$.
    Every configuration achieves $>97.4$\% top-1 accuracy; the total spread
    across all 27 cells is less than $0.4$\,pp. This confirms that \method does not require careful per-dataset
    hyperparameter tuning: practitioners may use the recommended defaults (LR$=10^{-3}$, WD$=0.05$, $\alpha=1$) across a wide range of tasks without a dedicated sweep.
  }
  \label{fig:hparam_grid}
\end{figure}